\newtheorem{theorem}{Theorem}
\icmltitlerunning{DP-InstaHide:  Provably Defusing Poisoning and Backdoor Attacks with Differentially Private Data Augmentations}
\begin{document}

\twocolumn[
%\icmltitle{Strong Data Augmentations Defuse Poisoning and Backdoor Attacks: \\ A Theoretically Principled Defense}
%\icmltitle{Strong Data Augmentations Defuse Poisoning and Backdoor Attacks}

\icmltitle{DP-InstaHide:  Provably Defusing Poisoning and Backdoor Attacks with Differentially Private Data Augmentations }

% It is OKAY to include author information, even for blind
% submissions: the style file will automatically remove it for you
% unless you've provided the [accepted] option to the icml2021
% package.

% List of affiliations: The first argument should be a (short)
% identifier you will use later to specify author affiliations
% Academic affiliations should list Department, University, City, Region, Country
% Industry affiliations should list Company, City, Region, Country

% You can specify symbols, otherwise they are numbered in order.
% Ideally, you should not use this facility. Affiliations will be numbered
% in order of appearance and this is the preferred way.
\icmlsetsymbol{equal}{*}

\begin{icmlauthorlist}
\icmlauthor{Eitan Borgnia}{md}
\icmlauthor{Jonas Geiping}{equal,sieg}
\icmlauthor{Valeriia Cherepanova}{equal,md}
\icmlauthor{Liam Fowl}{equal,md}
\icmlauthor{Arjun Gupta}{equal,md}
\icmlauthor{Amin Ghiasi}{equal,md}
\icmlauthor{Furong Huang}{md}
\icmlauthor{Micah Goldblum}{md}
\icmlauthor{Tom Goldstein}{md}
\end{icmlauthorlist}

\icmlaffiliation{md}{Dept. of Computer Science, University of Maryland, College Park, USA}
\icmlaffiliation{sieg}{Dept. of Electrical Engineering and Computer Science, University of Siegen, Siegen, Germany}
\icmlcorrespondingauthor{Amin Ghiasi}{amin@cs.umd.edu}

% You may provide any keywords that you
% find helpful for describing your paper; these are used to populate
% the "keywords" metadata in the PDF but will not be shown in the document
\icmlkeywords{Data Poisoning,Data Augmentation,Backdoor Attacks,Clean-label Poisoning, Targeted Data Poisoning,Defense,Differential Privacy,Mixup}

\vskip 0.3in
]

% this must go after the closing bracket ] following \twocolumn[ ...

% This command actually creates the footnote in the first column
% listing the affiliations and the copyright notice.
% The command takes one argument, which is text to display at the start of the footnote.
% The \icmlEqualContribution command is standard text for equal contribution.
% Remove it (just {}) if you do not need this facility.

%\printAffiliationsAndNotice{}  % leave blank if no need to mention equal contribution
\printAffiliationsAndNotice{\icmlEqualContribution} % otherwise use the standard text.

\begin{abstract}
Data poisoning and backdoor attacks manipulate training data to induce security breaches in a victim model.
%Corrupted data samples may be slightly perturbed or may contain backdoor triggers. 
These attacks can be provably deflected using differentially private (DP) training methods, although this comes with a sharp decrease in model performance.   The InstaHide method has recently been proposed as an alternative to DP training that leverages supposed privacy properties of the mixup augmentation, although without rigorous guarantees.

In this work, we show that strong data augmentations, such as mixup and random additive noise, nullify
%defending against 
poison attacks while enduring only a small accuracy trade-off. 
To explain these finding, we propose a training method, DP-InstaHide, which combines the mixup regularizer with additive noise.  A rigorous analysis of DP-InstaHide shows that mixup does indeed have privacy advantages, and that training with $k$-way mixup provably yields at least $k$ times stronger DP guarantees than a naive DP mechanism.  Because mixup (as opposed to noise) is beneficial to model performance, DP-InstaHide provides a mechanism for achieving stronger empirical performance against poisoning attacks than other known DP methods.  

\end{abstract}

\section{Introduction}
\label{sec:intro}

As the capabilities of machine learning systems expand, so do their training data demands.  To satisfy this massive data requirement, developers create automated web scrapers that download data without human supervision.  The lack of human control over the machine learning pipeline may expose systems to \emph{poisoned} training data that induces pathologies in models trained on it.  Data poisoning and backdoor attacks may degrade accuracy or elicit incorrect predictions in the presence of a triggering visual feature \citep{shafahi2018poison, chen2017targeted}.

To combat this threat model, a number of defenses against data poisoning have emerged.  Certified defenses based on \emph{differential privacy} (DP) provably desensitize models to small changes in their training data by adding noise to either the data or the gradients used by their optimizer \citep{ma2019data}. 
When a model is trained using sufficiently strong DP, it is not possible to infer whether a small collection of data points were present in the training set by observing model behaviors, and it is therefore not possible to significantly alter model behaviors by introducing a small number of poisoned samples.
 
In this work, we show that strong data augmentations, specifically mixup \cite{zhang2017mixup} and its variants, provide state-of-the-art empirical defense against data poisoning, backdoor attacks, and even adaptive attacks.  This good performance can be explained by the differential privacy benefits of mixup.  Mixup augmentation is the basis of the InstaHide algorithm \citep{huang_instahide_2020}, which aims to create dataset privacy by averaging random image pairs and then multiplying the results by a random mask to inject randomness.  Unfortunately, the privacy claims in the original InstaHide algorithm were not well founded, and the method was quickly broken \citet{carlini_attack_2020}.

We present a variant of InstaHide with rigorous privacy guarantees and study its use to rebuff poisoning attacks.  Like the original InstaHide, our approach begins by applying mixup augmention to a dataset.  However instead of introducing randomness through a multiplicative mask, we instead introduce randomness by added Laplacian noise. Our approach exploits the fact that mixup augmentation concentrates training data near the center of the ambient unit hypercube and saturates this region of space more densely than the original dataset. Hence, less noise is required to render the data private than if noise were added to the original data. In fact, we show that adding noise on top of $k$-way mixup creates a differential privacy guarantee that is $k$ times stronger (i.e., $\epsilon$ is $k$ time smaller) than adding noise alone. % This result is an improvement over the weaker bounds obtained using 
%R\'enyi differential privacy methods 

In addition to mixup, we also perform experiments with the related CutMix and MaxUp augmentations.  Because these augmentations are designed for improving generalization in image classifiers, we find that they yield a favorable robustness accuracy trade-off compared to other strong defenses \citep{yun2019cutmix, zhang2017mixup, gong2020maxup}. %Furthermore, because Mixup can simultaneously improve both privacy guaranteed and generalization, our DP guarantee offers a way to achieve privacy with less of an accuracy tradeoff than a naive Laplace mechanism. %Our contributions can be summarized as follows:
% \begin{itemize}
% \item INSERT HERE
% \end{itemize}

This paper is an extended version of our preliminary short paper, \citet{borgnia2020strong}.  We go beyond that preliminary work to characterize the privacy benefits of mixup theoretically, and greatly extend the empirical analysis of data augmentation defenses against data poisoning attacks.

\subsection{Related Work}
%(Poisoning background - Liam)
\label{subsec:related}
Broadly speaking, data poisoning attacks aim to compromise the performance of a network by maliciously modifying the data on which the network is trained. Data poisoning attacks vary in their goals, methods, and settings. In general, the goals of a data poisoning attack can be divided into \textit{indiscriminate} attacks, which seek to degrade general test-time performance of a network, and \textit{targeted} attacks, which aim to cause a specific example, or set of examples, to be misclassified \cite{barreno_security_2010}.   

Early work on data poisoning often focused on indiscriminate attacks in simple settings, such as support vector machines, logistic regression models, principle component analysis, or clustering algorithms \cite{munoz2017towards,xiao2015feature, biggio2012poisoning, koh2018stronger}.
%\section{The Threat Models}

However, these early methods do not scale well to modern deep networks \citep{huang2020metapoison}. Many recent works instead focus on targeted attacks and backdoor attacks, which are easier to scale and can be more insidious since they do not lead to any noticeable degradation in validation accuracy, making them harder to detect \citep{geiping2020witches}. 
Accordingly, in this work, we focus on defending against targeted and backdoor attacks. Within these attacks, however, there still exists a wide range of methods and settings. Below, we detail a few categories of attacks. A comprehensive enumeration of backdoor attacks, data poisoning attacks, and defense can be found in \citet{goldblum2020dataset}.

A \textbf{feature collision} attack occurs when the attacker modifies training samples so they collide with, or surround, a target test-time image. \textit{Poison Frogs} \citep{shafahi2018poison} optimizes poisons to minimize the $\ell_2$ distance in feature space between the poisoned and target features, while also including a regularization term on the size of the perturbations. Newer methods like \textit{convex polytope} \citep{zhu2019transferable} and \textit{bullseye polytope} \citep{aghakhani_bullseye_2020} surround the target image in feature space to improve the stability of poisoning. All these methods work primarily in the transfer learning setting, where a known feature extractor is fixed and a classification layer is fine-tuned on the perturbed data. 

\textbf{From-scratch} attacks modify training data to cause targeted misclassification of pre-selected test time images. Crucially, these attacks work in situations where a deep network is \textit{a priori} trained on modified data, rather than being pre-trained and subsequently fine-tuned on poisoned data. \textit{MetaPoison} \citep{huang2020metapoison} optimizes poisons by unrolling training iterations to solve a bi-level optimization problem.  \textit{Witches' Brew} \citep{geiping2020witches} approximately solve the bi-level optimization problem using a gradient alignment objective. 

\textbf{Backdoor attacks}
involve inserting a ``trigger," often a fixed patch, into training data. Attackers can then add the same patch to data at test time to fool the network into misclassifying modified images as the target class. Some forms of backdoor attacks will patch a number of training images with a small pattern or even modify just a single pixel \cite{gu2017badnets, tran2018spectral}. More complex attacks, like hidden-trigger backdoor \cite{saha2020hidden}, adaptively modify the training data to increase the success of the additive patch at test time. 

Conversely, a variety of defenses against poisoning attacks have also been proposed. Many defenses to targeted poisoning attacks can broadly be classified as \textit{filtering defenses}, which either remove or relabel poisoned data. These methods rely on the tendency of poisoned data to differ sufficiently from clean data in feature space. Intuitively, one could use a pretrained network as a feature extractor to sort out poison from the clean data.
Once the poisoned data is found and isolated in feature space, it is removed from the dataset and the model is retrained from scratch. Conveniently, filtering defenses do not require any external source of trusted clean data and work even if the feature extractor is trained on poisoned data. 

Among filtering defenses, {\em Spectral Signatures} \cite{tran2018spectral,paudice2018label}  filter data based on which points have the highest correlation with the top right singular vector of the feature covariance matrix.
{\em Activation Clustering} \cite{acchen2018detecting} instead uses $k$-means clustering to separate feature space, relying on the heuristic that poisons tend to cluster in feature space.
{\em DeepKNN} \cite{peri2019deep} relabels outlier data in feature space according to a $k$-nearest neighbors algorithm, hoping to diminish the effects of poison using the same heuristic that poisoned data are outliers in feature space. 
Unfortunately,
filtering defenses have proven weak against more advanced attacks, especially in the from-scratch setting \cite{geiping2020witches}, and may be nullified by adaptive attacks that carefully circumvent detection \citep{koh2018stronger}. 

Certified defenses avoid the possibility of breaking under adaptive attacks using robust mechanisms such as randomized smoothing or by partitioning the training data and individually training classifiers on each partition \citep{weber2020rab, levine2020deep}. 

Another class of principled defenses use differentially private SGD, where training gradients are clipped and noised thus diminishing the effects of poisoned gradient updates. However, these defenses have been shown to fail against advanced attacks, as they often lead to significant drops in clean validation accuracy \cite{geiping2020witches}.  

Outside of data poisoning, \citet{lee2019synthesizing} connect data augmentation and privacy by using tools for R\'enyi differential privacy for subsampling \citep{wang2019subsampled} to analyze R\'enyi bounds for image mixtures with Gaussian noise. While these bounds can readily be converted into differential privacy guarantees, they suffer from numeric instability and tend to be loose in the low privacy regime, where validation accuracy is maintained.

\section{Data Augmentation as an Empirical Defense against Dataset Manipulation}
\label{sec:experiments}
\begin{figure*}
\centering
\includegraphics[width=0.8\textwidth]{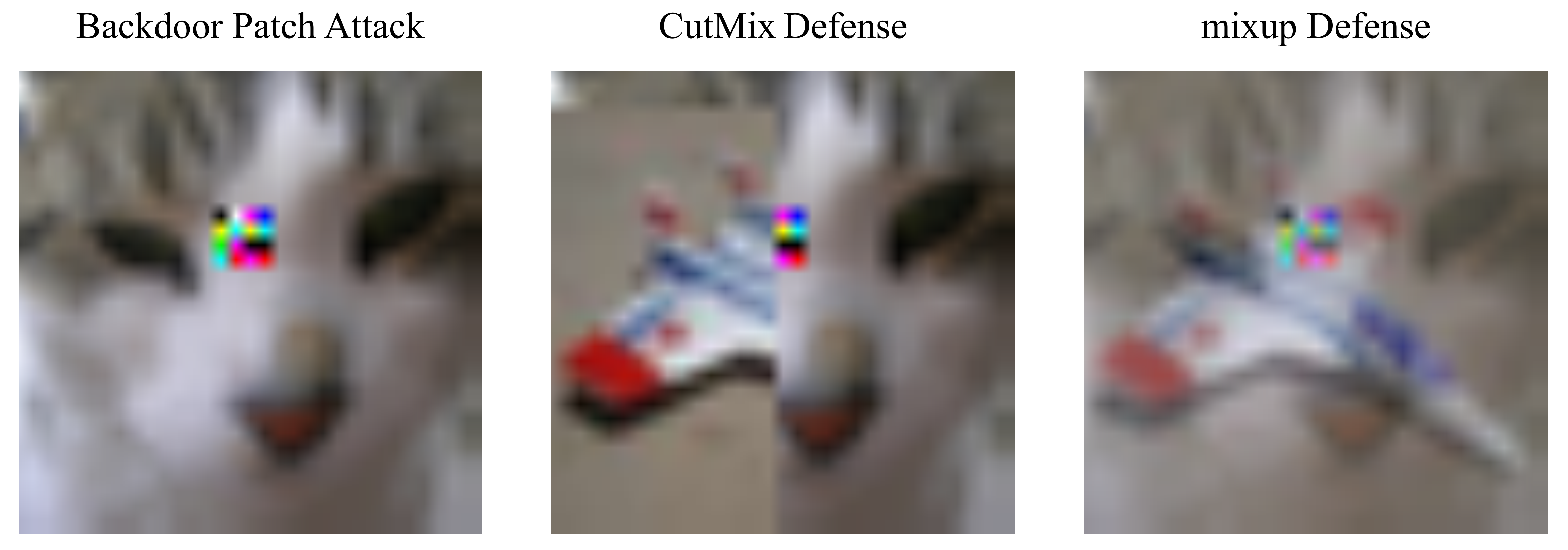}
\caption{``Cat'' image from CIFAR-10 with a backdoor patch and the same image with CutMix and mixup augmentations.}
\label{fig:augmentations}
\end{figure*}

Before studying the provable benefits of mixup, we study the empirical effectiveness of data augmentations to prevent poisoning. We are mainly interested in data augmentations that mix data points; we consider the hypothesis that data poisoning attacks rely on the deleterious effects of a subset of modified samples, which can in turn be diluted and deactivated by mixing them with other, likely unmodified, samples. 

One such augmentation is mixup, proposed in \citet{zhang2017mixup}, which trains on samples $(x,y)_{i=1}^k$ mixed randomly in input space
\begin{equation}
    \hat{x} = \sum_{i=1}^k \lambda_i x_i, \quad \hat{y} = \sum_{i=1}^k \lambda_i y_i,
\end{equation}
to form the augmented sample $(\hat{x},\hat{y}).$ Though $\lambda$ is traditionally drawn from a Dirichlet distribution parametrized by some chosen factor $\alpha$, we will restrict to the case of equal weighting $\lambda = 1/k$ to aid in theoretical analysis. From here on, $k$ is referred to as the mixture width.

CutOut \citep{devries2017improved}, which blacks out a randomly generated patch from an image, can be combined with mixup to form CutMix \citep{yun2019cutmix}, another type of mixing augmentation. Specifically, the idea is to paste a randomly selected patch from one image onto a second image, with labels computed by taking a weighted average of the original labels. The weights of the labels correspond to the relative area of each image in the final augmented data point.

MaxUp \citep{gong2020maxup} can also be considered as a mixing data augmentation, which first generates augmented samples using various techniques and then selects the sample with the lowest associated loss value to train on. CutMix and mixup will be the central mixing augmentations that we consider in this work, which we contrast with MaxUp in select scenarios.  

Adding noise to input data is another augmentation method, which can be understood as a mixing augmentation that combines input data not with another image, but with a random sample from the input space, unrelated to the data distribution. This mechanism is also common in differential privacy \citep{hardt2010geometry}. Since the exact original image is not apparent from its noised counterpart, adding noise decreases the sensitivity of the new data to the original dataset.  We will return to the subject of additive noise when we discuss the connection between data augmentation and differential privacy guarantees.

\subsection{Backdoor Attacks} %(Liam, Amin, Valeria)

\par In contrast to recent targeted data poisoning attacks, \textit{backdoor} attacks often involve inserting a simple preset trigger into training data to cause base images to be misclassified into the target class. For our experiments, we use small $4 \times 4$ randomly generated patches as triggers to poison the target class (See Figure~\ref{fig:augmentations}). To evaluate the baseline effectiveness of backdoor attacks, we poison a target class, train a ResNet-18 model on this poisoned data and use it to classify patched images from a victim test class. Only if a patched image from a victim class is labeled with the target class do we treat it as a successfully poisoned example.  Our results show that backdoor attacks achieve $98.3\%$ poison success when $100\%$ of images from the target class are poisoned and $45.6\%$ poison success when only $10\%$ of target images are patched (see Table~\ref{tab:backdoor}). In addition, when $100\%$ of training images from the target class are patched, clean test accuracy of the model drops by almost 10\% since the model is unable to learn meaningful features of the target class. \\

\par  We then compare the baseline model to models trained with the mixup and CutMix data augmentation techniques. We find that although mixup helps when only part of the target class is poisoned, it is not efficient as a defense against backdoor attacks when all images in the target class are patched. In contrast, CutMix is an extremely effective defense against backdoor attacks in both scenarios and it reduces poison success from 98.3\% to 14.1\% in the most aggressive setting. Finally, models trained on poisoned data with CutMix data augmentation have a clean test accuracy similar to the accuracy of models trained on clean data. Intuitively, CutMix often produces patch-free mixtures of the target class with other classes, hence the model does not solely rely on the patch to categorize images of this class.

We extend this analysis to two more complex attacks, clean-label backdoor attacks 
\cite{turner2018clean}, and hidden-Trigger backdoor attacks in Table~\ref{tab:backdoor_defense}.

\begin{table*}[t]
\vskip 0.15in
\caption{Validation accuracy and poison success for a baseline model, models trained with mixup and CutMix augmentations (rows 2,3) and Spectral Signature \cite{sstran2018spectral} and Activation Clustering \cite{acchen2018detecting} defenses (rows 4,5). The first two columns correspond to the case where 10\% of one class is poisoned. The last two columns correspond to the case where all images of one class are poisoned (a scenario in which filter defenses are inapplicable as no unmodified images remain for this class). The results are averaged across 20 runs (with different pairs of target and victim classes). }
\label{tab:backdoor}
\begin{center}
\begin{small}
\begin{sc}
\begin{tabular}{lcccc}
\toprule
\textbf{} &  Clean Accuracy (10\%) & Poison Success (10\%) & Clean Accuracy (100\%) & Poison Success (100\%) \\\\
\midrule
Baseline & 94.3\% & 45.6\% & 85.0\% & 98.3\%\\
\midrule
CutMix & 95.1\% & \bf{7.0\%} & 94.2\% & \bf{14.1}\%\\
mixup & 94.4\% & 23.9\% & 85.3\% & 99.8\%\\
\midrule 
SS & 92.3\% & 48.3\% & & \\
AC & 89.4\% & 44.0\% & & \\

\bottomrule
\end{tabular}
\end{sc}
\end{small}
\end{center}
\vskip -0.1in
\end{table*}

\subsection{Targeted Data Poisoning}
We further evaluate data augmentations as a defense against targeted data poisoning attacks. We analyze the effectiveness of CutMix and mixup as a defense against feature collision attacks in Table~\ref{tab:fc_defense}. Applying these data augmentations as a defense against Poison Frogs \citep{shafahi2018poison} (FC) is exceedingly successful, as the poisoned data is crafted independently there, making it simple to disturb by data augmentations. The poisons crafted via Convex Polytope (CP) \citep{zhu2019transferable} however, are more robust to data augmentations, due to the polytope of poisoned data created around the target.  Nonetheless, the effectiveness of CP is diminished more by data augmentations than by other defenses.

We then evaluate the success of data augmentations against Witches' Brew, the gradient matching attack of \citet{geiping2020witches} in Table~\ref{tab:witchesbrew_defenses}. Against this attack, we evaluate a wide range of data augmentations, as the attack is relatively robust to basic mixup data augmentations which mix only two images. However, using a stronger augmentation that mixes four images still leads to a strong defense in the non-adaptive setting (where the attacker is unaware of the defense). As this attack can be adapted to specific defenses, we also consider such a scenario. Against the adaptive attack, we found MaxUp to be most effective, evaluating the worst-case loss for every image in a minibatch over four samples of data augmentation drawn from cutout. To control for the effects of the CIFAR-10 dataset that we consider for most experiments, we also evaluate defenses against an attack on the ImageNet dataset in Table~\ref{tab:witchesbrew_defenses_imagenet}, finding that the described effects transfer to other datasets.

\begin{table}[!ht]
\caption{Poison success rates (lower is better for the defender) for various data augmentations tested against the gradient matching attack of \citet{geiping2020witches}. %For Laplacian input noise, we draw noise with a scale of $s=16/255$. 
All results are averaged over 20 trials. We report the success of both a non-adaptive and an adaptive attacker.}%\Micah{probably can hold off on noise until theory section.}}

\vspace{-2mm}
\label{tab:witchesbrew_defenses}
\vskip 0.15in
\begin{center}
\begin{small}
\begin{sc}
\begin{tabular}{l| ccccc}
\toprule\\
Augmentation & Non-Adaptive & Adaptive \\
\midrule
2-way mixup & 45.00\% & 72.73\% \\
Cutout & 60.00\% & 81.25\% \\
CutMix & 75.00\% & 60.00\% \\
4-way mixup & 5.00\% &  55.00\% \\
MaxUp-Cutout &5.26\%  & 20.00\% \\
%Laplacian Noise & 80.00\% & 80.00\%  \\
\bottomrule
\end{tabular}
\end{sc}
\end{small}
\end{center}
\vskip -0.1in
\end{table}

\vspace{-2mm}
\begin{table}[!ht]
\caption{Success rate for selected data augmentation when tested against the gradient matching attack on the ImageNet dataset. All results are averaged over 10 trials.}
\label{tab:witchesbrew_defenses_imagenet}
\vskip 0.15in
\begin{center}
\begin{small}
\begin{sc}
\begin{tabular}{l| ccccc}
\toprule\\
Augmentation & Poison success \\
\midrule
None & 90\% \\
2-way mixup & 50.00\% \\
4-way mixup & 30.00\% \\
\bottomrule
\end{tabular}
\end{sc}
\end{small}
\end{center}
\vskip -0.1in
\end{table}

\subsection{Comparison to Other Defenses} %(Liam, Amin, Valeria)
We compare our method to previous defenses referenced in Section \ref{subsec:related}.  %We use these methods as baselines for the performance of our defense.
We show that our method outperforms filter defenses when evaluating backdoor attacks, such as in Table~\ref{tab:backdoor} and Table~\ref{tab:backdoor_defense}, as well as when evaluating targeted data poisoning attacks, as we show for Poison Frogs and Convex Polytope in Table~\ref{tab:fc_defense} and for Witches' Brew in Table~\ref{tab:witchesbrew_defenses_imagenet} and ~\ref{tab:witchesbrew_competing_defenses}.
We note that data augmentations do not require additional training compared to filter defenses in some settings and are consequently more computationally efficient.

In Figure~\ref{fig:wb_tradeoff}, we plot the average poison success against the validation error for adaptive gradient matching attacks. We find that data augmentations exhibit a stronger security performance trade-off compared to other defenses.

% We also have the following table:
% Previously we replaced this table with the figures, but no we moved the figures somewhere else. I placed the data from this table in two smaller one, fitting to our formatting.
% 
% \begin{table*}[t]
% \caption{Defenses against gradient matching (via Witches' Brew) in the from-scratch setting for a budget of $1\%$ and bound of }
% \label{tab:fromscratch}
% \vskip 0.15in
% \begin{center}
% \begin{small}
% \begin{sc}
% \begin{tabular}{lcccr}
% \toprule
% Defense & Avg. Poison Success & Validation Acc. & Time \\
% \midrule
% None & 90.00\% ($\pm 6.71$)  & 92.08\% & 0:18:03 \\
% Mixup & 80.00\% ($\pm 8.94$)  & 91.66\% & 0:16:17 \\
% CutMix & 60.00\% ($\pm 10.95$)  & 91.74\% & 0:16:17 \\
% Maxup-Cutout & 20.00\% ($\pm 8.94$)  & 86.05\% & 0:41:03 \\

% Activation Clustering & 30.00\% ($\pm 10.25$)  & 83.95\% & 0:27:43 \\
% Deep K-NN & 90.00\% ($\pm 6.71$)  & 91.81\% & 3:57:03 \\
% Spectral Signatures & 95.00\% ($\pm 4.87$)  & 90.11\% & 0:32:21 \\

% Diff. Priv. SGD ($n=0.001$) & 80.00\% ($\pm 10.64$)  & 92.85\% & 2:54:00* \\
% Diff. Priv. SGD ($n=0.005$) & 92.50\% ($\pm 4.03$)  & 92.35\% & 2:56:42* \\
% Diff. Priv. SGD ($n=0.01$) & 86.25\% ($\pm 9.43$)  & 91.28\% & 2:51:10* \\
% Diff. Priv. SGD ($n=0.05$) & 40.00\% ($\pm 12.85$)  & 81.33\% & 2:49:38* \\
% Diff. Priv. SGD ($n=0.1$) & 12.50\% ($\pm 8.66$)  & 70.07\% & 2:44:46* \\

% \bottomrule
% \end{tabular}
% \end{sc}
% \end{small}
% \end{center}
% \vskip -0.1in
% \end{table*}
\begin{table}[!ht]
\caption{Poison success rate for Poison Frogs \citep{shafahi2018poison} and Convex Polytope \citep{zhu2019transferable} attacks when tested with baseline settings and when tested with mixup and CutMix. All results are averaged over 20 trials.}
\label{tab:fc_defense}
\vskip 0.15in
\begin{center}
\begin{small}
\begin{sc}
\begin{tabular}{lccccc}
\toprule\\
Attack & Baseline & SS & AC & mixup & CutMix \\\\
\midrule
FC & 80\% & 70\% & 45\% & \textbf{5\%} & \textbf{5\%} \\
\midrule
CP & 95\% & 90\% & 75\% & 70\% & \textbf{50}\% \\
\bottomrule
\end{tabular}
\end{sc}
\end{small}
\end{center}
\vskip -0.1in
\end{table}

\begin{table}[!ht]
\caption{Poison success rates (lower is better for the defender) for competing defenses when tested against the gradient matching attack compared to mixup. For DP-SGD, we consider a noise level of $n=0.01$. All results are averaged over 20 trials.}
\label{tab:witchesbrew_competing_defenses}
\vskip 0.15in
\begin{center}
\begin{small}
\begin{sc}
\begin{tabular}{l| c}
\toprule
Defense & Poison Success \\
\midrule
Spectral Signatures & 95.00\% \\
deepKNN & 90.00\% \\
Spectral Signatures & 95.00\% \\
Activation Clustering & 30.00\% \\
\midrule
DP-SGD & 86.25\%  \\
\midrule 
4-way mixup & 5.00\% \\
\bottomrule
\end{tabular}
\end{sc}
\end{small}
\end{center}
\vskip -0.1in
\end{table}

%\subsection{Defending against Backdoor Attacks}
\begin{table}[!ht]
\caption{Success rate against backdoor attacks when tested with baseline settings and when tested with the mixup and CutMix. All results are averaged over 20 trials.}
\label{tab:backdoor_defense}
\vskip 0.15in
\begin{center}
\begin{small}
\begin{sc}
\begin{tabular}{lccccc}
\toprule\\
Attack & Baseline & SS & AC & mixup & CutMix \\\\
\midrule
HTBD & 60\% & 65\% & 55\% & 20\% & \textbf{10\%} \\
\midrule
CLBD & 65\% & 60\% & 45\% & 25\% & \textbf{15\%} \\

\bottomrule
\end{tabular}
\end{sc}
\end{small}
\end{center}
\vskip -0.1in
\end{table}

% \subsection{Defending against data poisoning attacks}
% (Jonas)
% \Arjun{Clean Label backdoor attack is also from scratch one, and even feature collision does really good on from scratch when done with high poison budget. We have results for both so in case you need them take from the table below. Also, convex polytope was designed for transfer learning.}

\begin{figure*}
\vskip 0.2in
\begin{center}
\centerline{\includegraphics[width=0.75\textwidth]{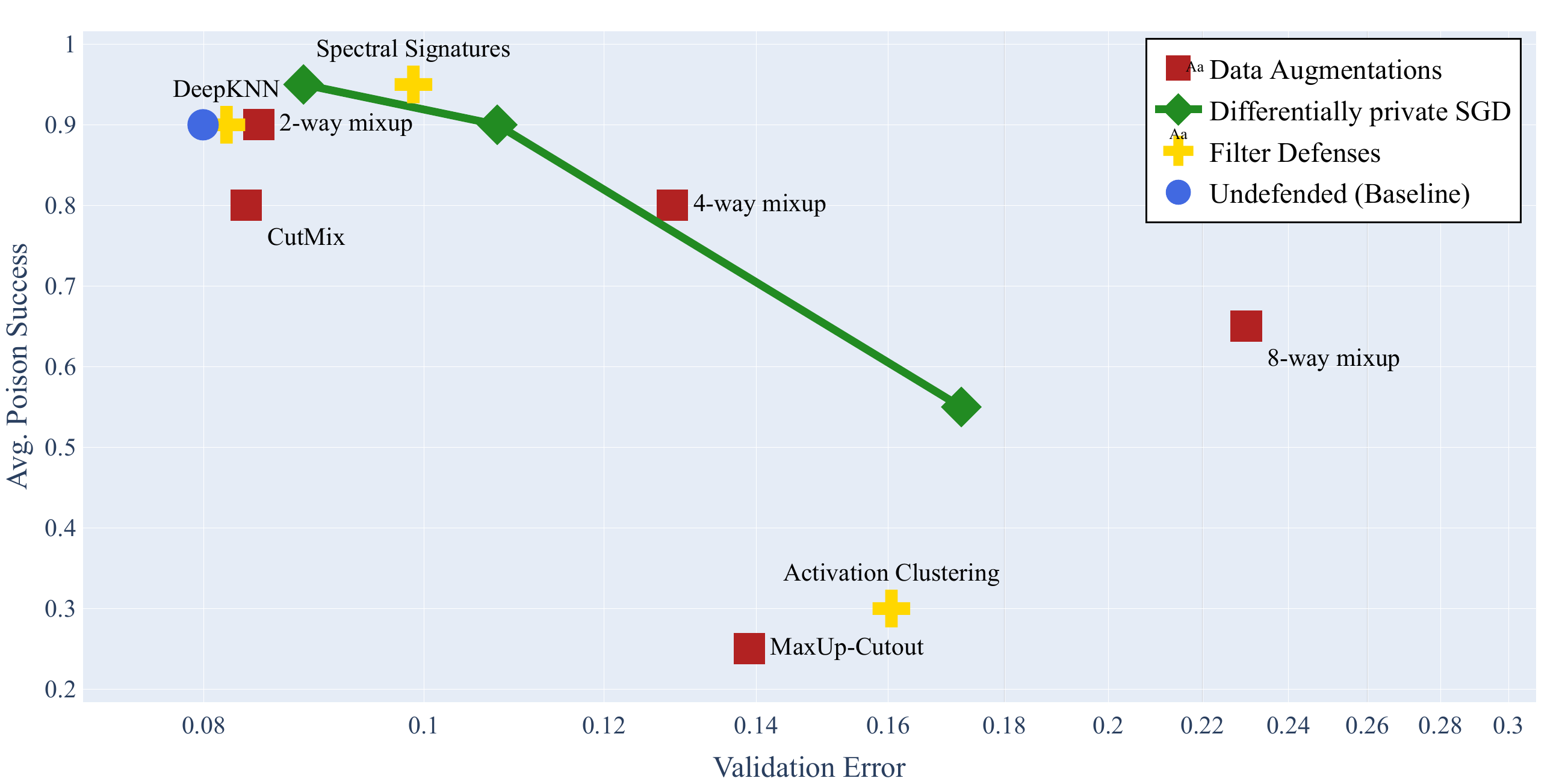}}
\caption{Trade-off between average poison success and validation accuracy for various defenses against gradient matching (adaptive).}
    \label{fig:wb_tradeoff}
\end{center}
\vskip -0.2in
\end{figure*}

\section{DP-InstaHide: A Mixup Defense with Provable Differential Privacy Advantages}
\label{sec:provable}
The original InstaHide method \cite{huang_instahide_2020} attempted to privatize data by first applying mixup, and then multiplying the results by random binary masks. While the idea that mixup enhances the privacy of a dataset is well founded, the original InstaHide scheme lies outside of the classical differential privacy framework, and is now known to be insecure \cite{carlini_attack_2020}.  We propose a variant of the method, DP-InstaHide, which replaces the multiplicative random mask with additive random noise. The resulting method comes with a differential privacy gaurantee that enables us to quantify and analyze the privacy benefits of mixup augmentation.

%We highlight a theoretical connection between data augmentation and defense against dataset manipulations by proving that mixup enhances classical differential privacy guarantees when additive noise is used.  

Differential privacy, developed by \citet{dwork2014algorithmic}, aims to prevent the leakage of potentially compromising information about individuals present in released data sets. By utilizing noise and randomness, differentially private data release mechanisms are provably robust to any auxiliary information available to an adversary. 

Formally, let $\mathcal{M}: \mathcal{D} \rightarrow \mathcal{R}$ be a random mechanism, mapping from the space of datasets to a co-domain containing potential outputs of the mechanism. We consider a special case where $\mathcal{R}$ is another space of datasets, so that $\mathcal{M}$ outputs a synthetic dataset. We say two datasets $D,D' \in \mathcal{D}$ are adjacent if they differ by at most one element, that is $D'$ has one fewer, one more, or one element different from $D$.

Then, $\mathcal{M}$ is $(\epsilon,\delta)$-differentially private if it satisfies the following inequality for any $U \subseteq \mathcal{R}$:
\begin{equation}
  \mathbb{P}[\mathcal{M}(D) \in U] \leq e^{\epsilon}\mathbb{P}[\mathcal{M}(D') \in U] + \delta.
\end{equation} 

Intuitively, the inequality and symmetry in the definition of dataset adjacency tells us that the probability of getting any outcome from $\mathcal{M}$ does not strongly depend on the inclusion of any individual in the dataset. In other words, given any outcome of the mechanism, a strong privacy guarantee implies one cannot distinguish whether $D$ or $D'$ was used to produce it. This sort of indistinguishability condition is what grants protection from linkage attacks such as those explored by \citet{narayanan2006break}. The quantity $\epsilon$ describes the extent to which the probabilities differ for \textit{most} outcomes, and $\delta$ represents the probability of observing an outcome which \textit{breaks} the $\epsilon$ guarantee.

\begin{figure*}[t]
\centering
\includegraphics[width=\textwidth]{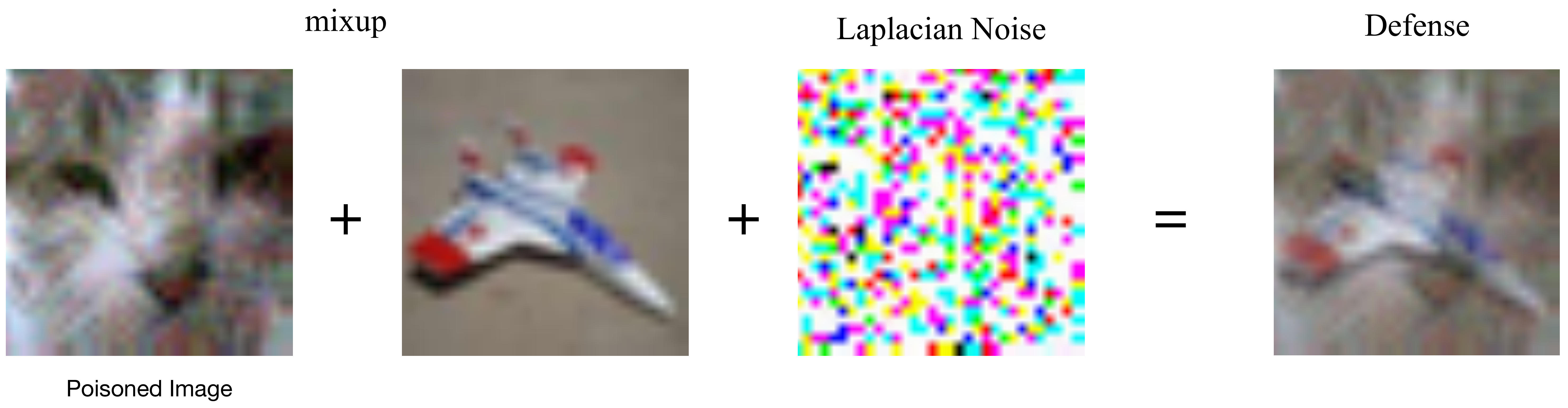}
\caption{Illustration of the DP-InstaHide defense on two CIFAR-10 images, the first of which has been poisoned with $\varepsilon=16$.  Mixup is used to average two images, and then Laplacian noise is added,}
\label{fig:mixup_noise}
\end{figure*}

In the case where differentially private datasets are used to train neural networks, such indistinguishability also assures poisoned data will not have a large effect on the trained model. \citet{ma2019data} formalize this intuition by proving a lower bound for the defensive capabilities of differentially private learners against poisoning attacks.

We define the threat model as taken from \citet{ma2019data}: The attacker aims to direct the trained model $\mathcal{M}(D')$ to reach some attack target by modifying at most $l$ elements of the clean dataset $D$ to produce the poisoned dataset $D'$.  We measure the distance of $\mathcal{M}(D')$ from the attack target using a cost function $C$, which takes trained models as an input and outputs an element of $\mathbb{R}$. The attack problem is then to minimize the expectation of the cost of $\mathcal{M}(D')$.
\begin{equation}
    \min_{D'} J(D') := \mathbb{E}[C(\mathcal{M}(D'))]
\end{equation}
 
Finally, we arrive at the theorem proven in \citet{ma2019data}.
\begin{theorem}
For an $(\epsilon,\delta)$-differentially private mechanism $\mathcal{M}$ and bounded cost function $|C| \leq B$, it follows that the attack cost $J(D')$ satisfies

\begin{equation}
    J(D') \geq \max\{e^{-l\epsilon}\left(J(D)+\frac{B\delta}{e^{\epsilon}-1}\right)-\frac{B\delta}{e^{\epsilon}-1}, 0\}
\end{equation}

\begin{equation}
    J(D') \geq \max\{e^{-l\epsilon}\left(J(D)+\frac{B\delta}{e^{\epsilon}-1}\right)+\frac{B\delta}{e^{\epsilon}-1}, -B\}
\end{equation}

where the former bound holds for non-negative cost functions and the latter holds for non-positive cost functions.
\end{theorem}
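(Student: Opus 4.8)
The plan is to obtain both inequalities from two standard ingredients: \emph{group privacy}, which amplifies $(\epsilon,\delta)$-DP to datasets that differ in $l$ rather than one element, and the \emph{layer-cake identity} $\mathbb{E}[X]=\int_0^{\infty}\mathbb{P}[X>t]\,dt$ for non-negative $X$, which is what converts the set-level privacy inequality into a bound on the scalar $J$.

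\emph{Step 1: group privacy.} Because $D$ and $D'$ differ in at most $l$ elements, there is a chain $D=D_0,D_1,\dots,D_l=D'$ of datasets with consecutive members adjacent. Applying the defining DP inequality along each link and telescoping the geometric sum of the $\delta$-terms gives, for every measurable $U\subseteq\mathcal{R}$,
\[
\mathbb{P}[\mathcal{M}(D)\in U]\;\le\;e^{l\epsilon}\,\mathbb{P}[\mathcal{M}(D')\in U]+\delta_l,\qquad \delta_l:=\frac{e^{l\epsilon}-1}{e^{\epsilon}-1}\,\delta,
\]
and the same with $D,D'$ interchanged; the argument uses only that DP holds for \emph{all} measurable sets, so it applies verbatim to complements as well.

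\emph{Step 2: non-negative cost.} Suppose $0\le C\le B$. For $t\in[0,B]$ apply Step~1 to the measurable set $\{C>t\}\subseteq\mathcal{R}$ (folding any randomness of the training map into $\mathcal{M}$), obtaining $\mathbb{P}[C(\mathcal{M}(D))>t]\le e^{l\epsilon}\mathbb{P}[C(\mathcal{M}(D'))>t]+\delta_l$. Integrate this over $t\in[0,B]$ — the exchange of the $t$-integral with the probability being legitimate by Tonelli, since everything is non-negative — and use the layer-cake identity on each side to get $J(D)\le e^{l\epsilon}J(D')+B\delta_l$. Solving for $J(D')$ and substituting $e^{-l\epsilon}\delta_l=\tfrac{1-e^{-l\epsilon}}{e^{\epsilon}-1}\delta$ rearranges the right-hand side to exactly $e^{-l\epsilon}\bigl(J(D)+\tfrac{B\delta}{e^{\epsilon}-1}\bigr)-\tfrac{B\delta}{e^{\epsilon}-1}$; together with the trivial bound $J(D')\ge 0$ (valid because $C\ge 0$) this is the first claimed inequality.

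\emph{Step 3: non-positive cost, and the main obstacle.} For $-B\le C\le 0$, reduce to Step~2 by a boundedness-preserving substitution — e.g.\ replacing $C$ by $\tilde C:=C+B\in[0,B]$, or equivalently reading $\min_{D'}J(D')$ for $C$ as $-\max_{D'}J_{-C}(D')$ for the non-negative cost $-C$. Since $J$ transforms affinely under the substitution, the Step~2 estimate carries over after re-arrangement, and the trivial floor $J(D')\ge -B$ (valid because $C\ge -B$) furnishes the other branch of the maximum. I expect the real difficulty to be bookkeeping rather than conceptual: one must carry $\delta_l$ and the affine shift through consistently so that the final expression lands in precisely the stated closed form, and one must pick the correct orientation of the group-privacy inequality — which of $D,D'$ gets amplified by $e^{l\epsilon}$ — so that the result is a genuine \emph{lower} bound on $J(D')$; the other orientation yields a correct but vacuous inequality.
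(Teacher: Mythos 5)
The paper itself does not prove this theorem; it quotes it from \citet{ma2019data}, so your attempt has to be judged against the standard argument there. Your Steps 1--2 are exactly that argument and are correct: group privacy with $\delta_l=\frac{e^{l\epsilon}-1}{e^{\epsilon}-1}\delta$, the layer-cake identity on $[0,B]$, Tonelli, and rearrangement give $J(D)\le e^{l\epsilon}J(D')+B\delta_l$ and hence the first bound, with the trivial floor $J(D')\ge 0$.

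Step 3, however, has a genuine gap. The shift $\tilde C=C+B$ does not ``carry over after re-arrangement'': the bound is not equivariant under adding constants, because $\tilde J=J+B$ turns the Step-2 estimate into $J(D')\ge e^{-l\epsilon}\left(J(D)+B\right)-B-e^{-l\epsilon}B\delta_l$, which contains a spurious $-B\left(1-e^{-l\epsilon}\right)$ term and is not of the claimed form. The negation route is the right one, but it requires the \emph{other} orientation of group privacy, $\mathbb{P}[\mathcal{M}(D')\in U]\le e^{l\epsilon}\mathbb{P}[\mathcal{M}(D)\in U]+\delta_l$, applied to the sets $\{C<-t\}$ together with the identity $\mathbb{E}[X]=-\int_0^B\mathbb{P}[X<-t]\,dt$ for $X\in[-B,0]$; you flag the orientation question but leave it unresolved, and in this case the ``other'' orientation is not vacuous --- it is precisely what produces a lower bound on $J(D')$. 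Carrying it out yields $J(D')\ge e^{l\epsilon}\bigl(J(D)-\frac{B\delta}{e^{\epsilon}-1}\bigr)+\frac{B\delta}{e^{\epsilon}-1}$, which is the bound in \citet{ma2019data} but does \emph{not} match the second display as printed here: that display (with $e^{-l\epsilon}$ and a plus sign inside the parentheses) is a transcription error and is false as written --- take $\delta=0$ and a mechanism saturating group privacy on the event $\{C=-B\}$, so that $J(D')=e^{l\epsilon}J(D)<e^{-l\epsilon}J(D)$ since $J(D)<0$. So your plan cannot land ``in precisely the stated closed form'' for the non-positive case; the obstacle is not bookkeeping but the statement itself, and your write-up should derive and report the corrected form.
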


Empirically, however, it is found that the defense offered by differential privacy mechanisms tends to be more effective than the theoretical limit. Likely, this is a result of differential privacy definitionally being a worst-case guarantee, and in practice the worst case is rarely observed. 

We find that differential privacy achieved through the combination of $k$-way mixup and additive Laplacian noise is an example of such a defense, practically visualized in Fig.\ref{fig:mixup_noise}. Because mixup augmentation concentrates training data near the center of the unit hypercube, less noise must be added to the mixed up data to render the noisy data indistinguishable from other points nearby in comparison to solely adding noise to the data points \citep{zhang2017mixup}. Additionally, mixup benefits from improved generalization due to its enforcement of linear interpolation between classes and has recently been shown to be robust to a variety of adversarial attacks, such as FGSM \cite{zhang2020does}. We use a combinatorial approach to achieve a formal differential privacy guarantee for mixup with Laplacian noise, which in tandem with the result from \citet{ma2019data} gives us a direct theoretical protection from data poisoning.

\begin{figure*}[t]
    \centering
    \begin{subfigure}[t]{0.49\textwidth}
    \includegraphics[width=\textwidth]{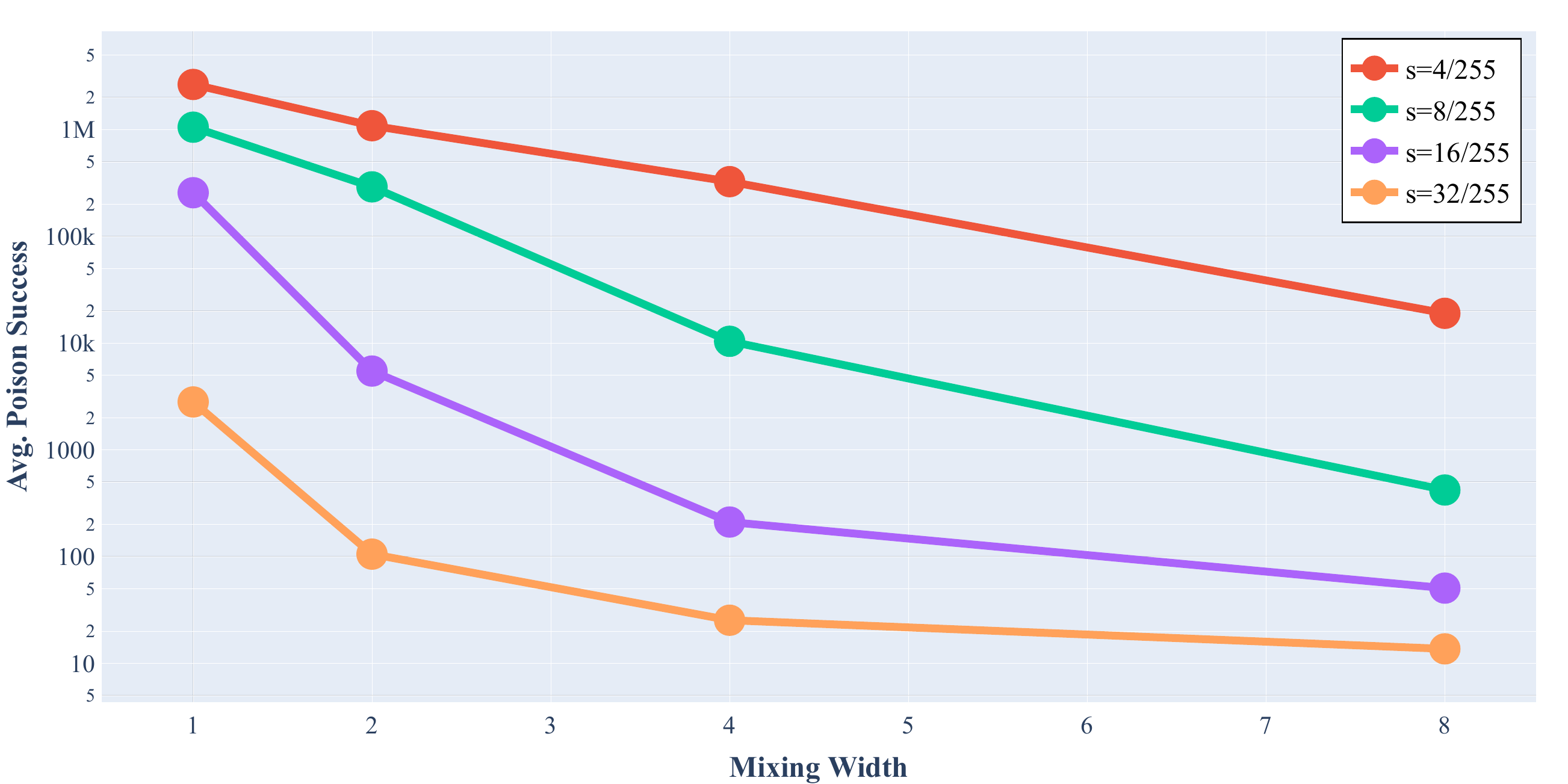}
    \caption{Theoretical privacy guarantees}
    \label{fig:bounds:theoretical}
    \end{subfigure}
    \begin{subfigure}[t]{0.49\textwidth}
    \includegraphics[width=\textwidth]{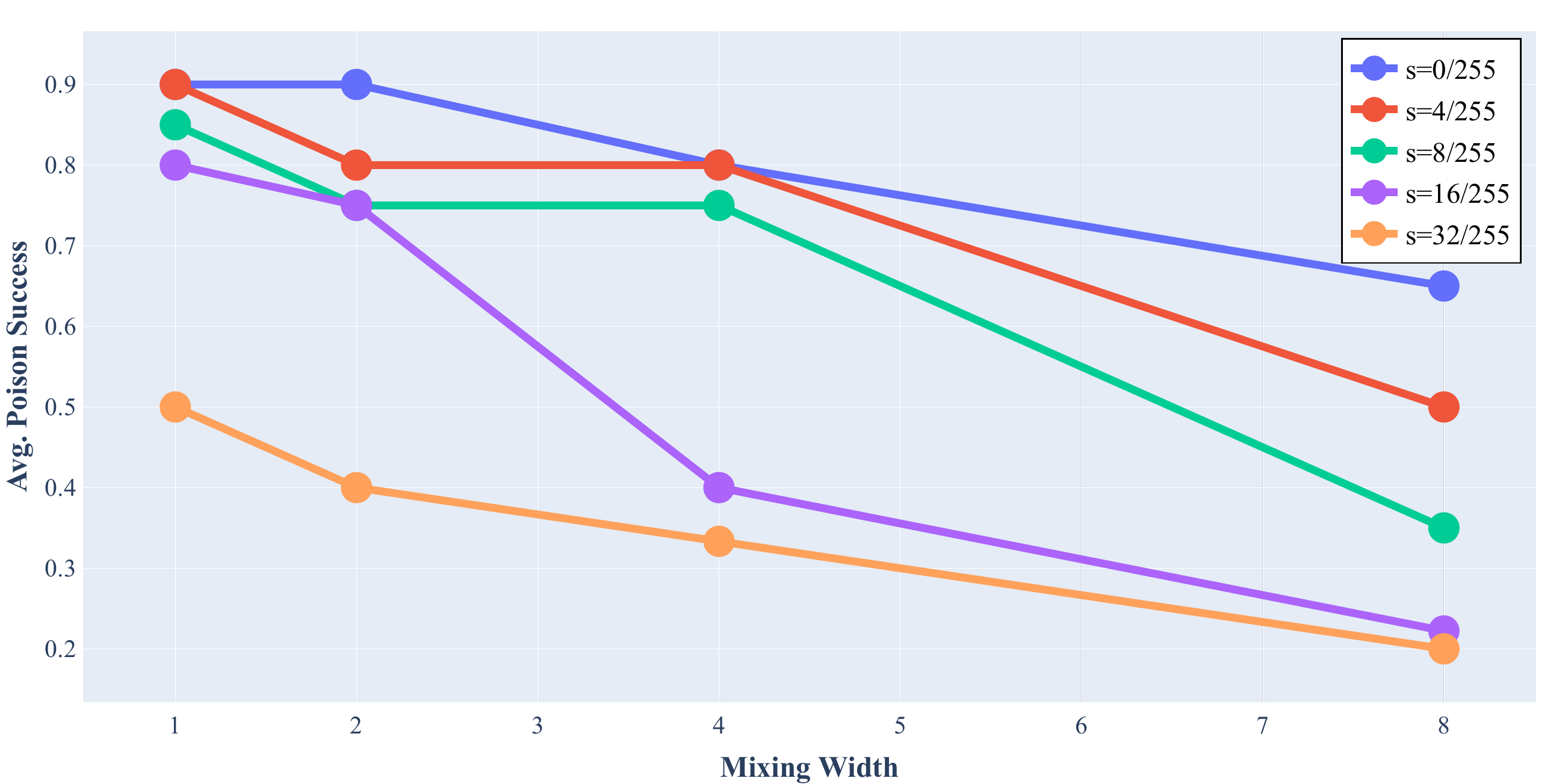}
    \caption{Empirical defense against poisoning attacks}
    \label{fig:bounds:empirical}
    \end{subfigure}
    \caption{Theoretical and empirical mixup. Left: Privacy guarantee $\epsilon$ as a function of mixture width $k$, computed for each implemented Laplacian noise level $s$. We use values $n = T = 5 \times 10^4$, corresponding to the CIFAR-10 dataset. Right: Poisoning success for a strong adaptive gradient matching attack for several mixture widths and noise levels.}
    \label{fig:bounds}
\end{figure*}
\subsection{A Theoretical Guarantee for DP-InstaHide}

Above, we discussed how strong data augmentations, such as mixup and random noise, provide an empirically strong defense against poisoning. We can explain the strength of this defense, and provide a rigorous guarantee, by analyzing the privacy benefits of mixup within a differential privacy framework.

Let $D$ be a dataset of size $n$ and $D'$ denote the same dataset with the point $x_0$ removed. Let $d$ be the dimension of data points and assume the data lies in a set $V$ of diameter one, i.e., $sup\{||D - D^\prime||_1 : D, D^\prime \in V\}\le 1$. We sample a point of the form $z = \frac{1}{k}(x_1 + x_2 + \cdots + x_k) + \eta$, where the $x_i$ are drawn at random from the relevant dataset $P$ without replacement, and $\eta \sim Lap(\mathbf{0},\sigma I)$ is the independent $d$-dimensional isotropic Laplacian additive noise vector with density function $\phi_{\sigma}(\eta) = \frac{1}{(2\sigma)^d}e^{\|\eta\|_1/\sigma}.$
The random variable representing the outcome of the sampling is therefore a sum of random variables:
\begin{equation}
    \mathcal{M}_P = \frac{1}{k}\sum_{i=1}^k X_i + N  
\end{equation}
We use $p$ and $q$ to denote the probability density functions of $\mathcal{M}_D$, and $\mathcal{M}_{D'}$ respectively.

\begin{theorem}
\label{thm:mixup_privacy}
Assume the data set $D$ has $\ell_1$-norm radius less than 1, and that mixup groups of mixture width $k$ are sampled without replacement. The mixup plus Laplacian noise mechanism producing a data set of size $T$ satisfies $(\epsilon, 0)$-differential privacy with
$$\epsilon = T\max \left\{A,B \right\} \le \frac{T}{k\sigma}$$
where
$$A = \log \left( 1- \frac{k}{n} +  e^{\frac{1}{k\sigma}} \frac{k}{n}\right), \,\, B = \log\frac{n}{n-k+k e^{-\frac{1}{k\sigma}}}.$$

% $$ \epsilon = T \cdot \max\left(\log A , \log  B  \right) $$

% $$ A = \left(\frac{n-1}{n}\right)^k + \sum_{j=1}^k{k\choose j} \frac{(n-1)^{2k-j} e^{\frac{j}{\sigma k}}}{n^{2k}}, $$

% $$ B = \frac{n^k}{(n-1)^k + \sum_{j=1}^k{k\choose j} \frac{(n-1)^{2k-j} e^{\frac{-j}{\sigma k}}}{n^{k}}} $$
\end{theorem}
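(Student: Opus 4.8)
The plan is to bound the privacy loss of a single mixup-plus-noise sample and then invoke composition over the $T$ draws. Fix $D=\{x_1,\dots,x_{n-1},x_0\}$ and its neighbour $D'=D\setminus\{x_0\}$, and write $\bar x_S=\tfrac1k\sum_{i\in S}x_i$ for the centre of a mixture group $S$. Averaging the Laplacian density $\phi_\sigma$ over the uniformly chosen $k$-subset, the one-sample densities are $p(z)=\binom{n}{k}^{-1}\sum_{|S|=k,\,S\subseteq[n]}\phi_\sigma(z-\bar x_S)$ and $q(z)=\binom{n-1}{k}^{-1}\sum_{|S|=k,\,S\subseteq[n]\setminus\{0\}}\phi_\sigma(z-\bar x_S)$. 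Splitting the sum defining $p$ according to whether $0\in S$ and using the identities $\binom{n-1}{k}/\binom{n}{k}=(n-k)/n$ and $\binom{n-1}{k-1}/\binom{n}{k}=k/n$, one gets the exact relation $p(z)=\tfrac{n-k}{n}q(z)+\binom{n}{k}^{-1}E(z)$, where $E(z):=\sum_{0\in S}\phi_\sigma(z-\bar x_S)$; the whole problem then reduces to sandwiching the ``extra mass'' $E(z)$ between constant multiples of $q(z)$.

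The core estimate rests on two facts: the Laplacian density is log-Lipschitz in $\ell_1$, i.e. $e^{-\|u-v\|_1/\sigma}\le\phi_\sigma(u)/\phi_\sigma(v)\le e^{\|u-v\|_1/\sigma}$; and if $S=\{0\}\cup S'$ and we swap the index $0$ for any $j\notin S$, the mixture centre moves by only $\|\bar x_S-\bar x_{\{j\}\cup S'}\|_1=\tfrac1k\|x_0-x_j\|_1\le\tfrac1k$ since $D$ has $\ell_1$-diameter at most $1$. Combining them gives $e^{-1/(k\sigma)}\phi_\sigma(z-\bar x_{\{j\}\cup S'})\le\phi_\sigma(z-\bar x_S)\le e^{1/(k\sigma)}\phi_\sigma(z-\bar x_{\{j\}\cup S'})$ for every valid swap $j$. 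Averaging these inequalities over the $n-k$ choices of $j$ and summing over the $\binom{n-1}{k-1}$ sets $S\ni 0$, each $k$-subset $T\subseteq[n]\setminus\{0\}$ appears on the right exactly $k$ times (once for each of its elements playing the swapped-in role), so the double sum collapses to $k\binom{n-1}{k}q(z)$; with $k\binom{n-1}{k}/(n-k)=\binom{n-1}{k-1}$ this yields $e^{-1/(k\sigma)}\binom{n-1}{k-1}q(z)\le E(z)\le e^{1/(k\sigma)}\binom{n-1}{k-1}q(z)$. Plugging into the relation for $p$ and dividing by $q(z)$ gives the pointwise bound $e^{-B}\le p(z)/q(z)\le e^{A}$ with $A,B$ exactly as stated, and since the ``add a point'' neighbour is the same pair with $p$ and $q$ interchanged, one sample is $(\max\{A,B\},0)$-differentially private.

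To finish, I would note that a size-$T$ output is $T$ such samples, each a function of $D$ and fresh independent Laplacian noise, so the basic composition theorem for pure differential privacy multiplies the per-sample loss by $T$, giving $\epsilon=T\max\{A,B\}$. For the closed-form upper bound, write $A=\log\bigl(1+\tfrac kn(e^{1/(k\sigma)}-1)\bigr)$ and $B=-\log\bigl(1-\tfrac kn(1-e^{-1/(k\sigma)})\bigr)$; since $k/n\le1$ and $\log$ is increasing, $A\le\log e^{1/(k\sigma)}=1/(k\sigma)$, and $1-\tfrac kn(1-e^{-1/(k\sigma)})\ge e^{-1/(k\sigma)}$ gives $B\le1/(k\sigma)$, so $\epsilon\le T/(k\sigma)$.

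I expect the double-counting in the core estimate to be the crux: simply discarding $E(z)$ only yields $q(z)/p(z)\le n/(n-k)$, which is too weak to reach $B$, so one really has to average each $S\ni 0$ against all of its single-element swaps and verify that every resulting subset is produced with multiplicity exactly $k$, which is what recovers the factor $e^{-1/(k\sigma)}$ multiplying $\binom{n-1}{k-1}$. The remaining ingredients — the density decomposition, the log-Lipschitz inequality for $\phi_\sigma$, composition over the $T$ samples, and the two elementary $\log$ estimates — should be routine.
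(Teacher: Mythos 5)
Your proof is correct and takes essentially the same route as the paper's: decompose $p$ according to whether $x_0$ lies in the mixture group, identify the $x_0$-free part as $\left(1-\tfrac{k}{n}\right)q$, bound the remaining part by $e^{\pm 1/(k\sigma)}\tfrac{k}{n}q$ by swapping $x_0$ for another data point and invoking the $\ell_1$ log-Lipschitz property of the Laplacian density, then finish with composition over the $T$ released samples and the two elementary log estimates. Your multiplicity-$k$ counting over unordered subsets is simply the unordered-subset version of the paper's trick of summing over length-$k$ tuples from $D'$ and discarding the last entry, so the two arguments coincide in substance.
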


%\nonumber
\begin{proof}
To prove differential privacy, we must bound the ratio of $\mathbb{P}[\mathcal{M}_D \in U]$ to $\mathbb{P}[\mathcal{M}_{D'} \in U]$ from above and below, where $U \subseteq V$ is arbitrary and measurable. For a fixed sampling combination $x = (x_1, \dots, x_k) \in D^k$, the density for %probability of
observing $z=\frac{1}{k}\sum_{i=1}^k x_i + N$ is given by $\phi_{\sigma}\left(z-\sum_{i=1}^k x_i\right)$.  Since there are ${n \choose k}$ possible values that $x$ can take on, each of equal probability, we have
$$p(z) =  \frac{k!(n-k)!}{n!}  \sum_{x \in D^k} \phi_{\sigma}\left(z-\sum_{i=1}^k x_i\right).$$

Let's now write a similar expression for $q(z)$. We have 
\begin{equation}
q(z) =\frac{k!(n-k-1)!}{(n-1)!} \sum_{x \in {D'}^k} \phi_{\sigma}\left(z-\sum_{i=1}^k x_i\right).\label{defq}
\end{equation}

Now, we write the decomposition $p(z)=p_0(z)+p_1(z)$, where $p_0(z)$ is the probability of the ensemble not containing $x_0$ times the conditional density for observing $z$ given this scenario, and $p_1(z)$ is the probability of
having $x_0$ in the ensemble times the conditional density for observing $z$ given this scenario.

Then, we have
\begin{equation}
    p_0(z) = \left( 1- \frac{k}{n}\right)q(z). \label{pzero}
\end{equation}

Now, consider $p_1(z).$  This can be written
\begin{equation}\label{eq:beforeswap}
\small
p_1(z) = \frac{k}{n}  \frac{(k-1)!(n-k-2)!}{(n-1)!} \!\!\! \! \!\sum_{x \in {D'}^{k-1}}\!\!\!\!\! \phi_{\sigma}\left(z-x_0- \sum_{i=1}^{k-1} x_i\right). 
\end{equation}
In the equation above, $\frac{k}{n} $ represents the probability of drawing an ensemble $x$ that contains $x_0,$ and the remainder of the expression is the probability of forming $z-x_0$ using the remaining $k-1$ data points in the ensemble. 

We can simplify equation \eqref{eq:beforeswap} using a combinatorial trick.  Rather than computing the sum over all tuples of size $k-1,$ we compute the sum over all tuples of length $k,$ but we discard the last entry of each tuple.  We get 
\begin{equation}\label{eq:afterswap}
p_1(z) = \frac{k}{n}  \frac{k!(n-k-1)!}{(n-1)!} \! \! \sum_{x \in {D'}^{k}} \phi_{\sigma}\left(z-x_0- \sum_{i=1}^{k-1} x_i\right).
\end{equation}

Now, from the definition of the Laplace density, we have that if $\|u-v\|_1<\epsilon$ for any $u,v$ then
$$e^{-\|u-v\|_1/\sigma}\phi_{\sigma}(v) \le \phi_{\sigma}(u)\le e^{\|u-v\|_1/\sigma}\phi_{\sigma}(v).$$
Let's apply this identity to \eqref{eq:afterswap} with
$u= z-x_0 - \sum_{i=1}^{k-1} x_i$ and $v= z- \sum_{i=1}^{k} x_i$. We get
$$e^{-\frac{1}{k\sigma}} \frac{k}{n}  q(z) \le p_1(z) \le e^{\frac{1}{k\sigma}}  \frac{k}{n} q(z), $$
where we have used the fact that the dataset $D$ has unit diameter to obtain $\|u-v\|_1\le \frac{j}{k},$ and we used the definition \eqref{defq} to simplify our expression.
\begin{figure*}[t]
    \centering
    \includegraphics[width=0.75\textwidth]{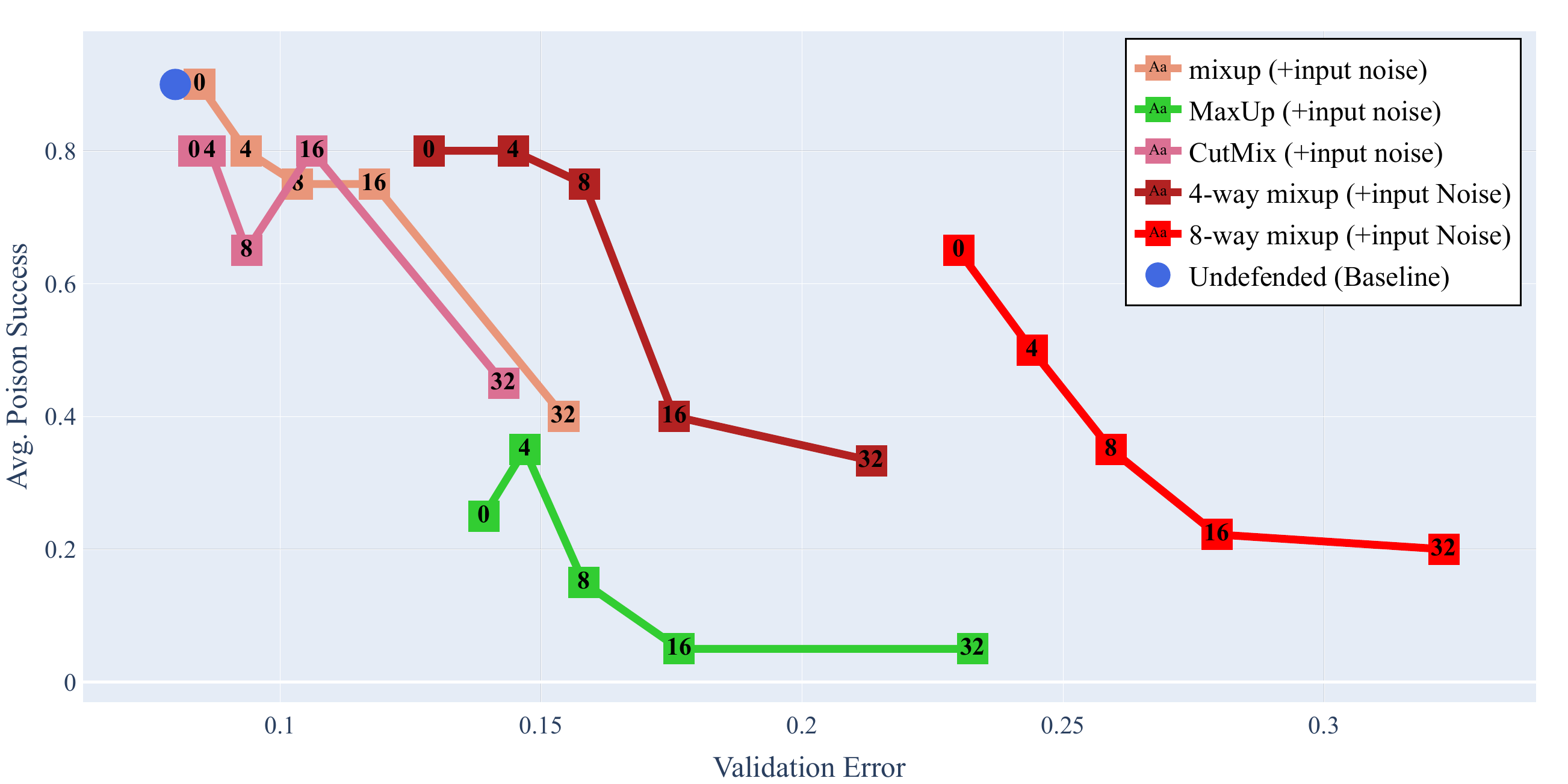}
    \caption{Enhancing various data augmentations with Laplacian noise. We visualize the security-performance trade-off when enhancing the data augmentations considered in Sec.~\ref{sec:experiments} with Laplacian noise as predicted by Thm.~\ref{thm:mixup_privacy}. We visualize the development of these data augmentations when adding Laplacian noise with scales ($2/255$, $4/255$, $8/255$, $16/255$, $32/255$).}
    \label{fig:variations_mixing}
\end{figure*}

Now, we add \eqref{pzero} to this equation.  We get
\begin{align*}
\left( 1- \frac{k}{n} +  e^{-\frac{1}{k\sigma}} \frac{k}{n}\right)&  q(z) \le p(z) \\
 &\le  \left( 1- \frac{k}{n} +  e^{\frac{1}{k\sigma}} \frac{k}{n}\right)  q(z).
\end{align*}
From this, we arrive at the conclusion
\begin{align*}
\frac{p(z)}{q(z)} \le  
 \left( 1- \frac{k}{n} +  e^{\frac{1}{k\sigma}} \frac{k}{n}\right) \le e^{\frac{1}{k\sigma}},
\end{align*}
and
\begin{align*}
\frac{q(z)}{p(z)} \le  
 \frac{n}{n-k+k e^{-\frac{1}{k\sigma}}} \le e^{\frac{1}{k\sigma}}.
\end{align*}
The left-most upper bound in the above equation is achieved by replacing $k$ with $n$ wherever $k$ appears outside of an exponent.
We get the final result by taking the log of these bounds and using the composibility property of differential privacy to account for the number $T$ of points sampled.

\end{proof}

{\em Remark:} A classical Laplacian mechanism for differentially private dataset release works by adding noise to each dataset vector separately and achieves privacy with $\epsilon=\frac{1}{\sigma}$. Theorem \ref{thm:mixup_privacy} recovers this bound in the case $k=1,$ however it also shows that $k$-way mixup enhances the privacy guarantee over the classical mechanism \emph{by a factor of at least $k$}.

%The following shows a side-by-side comparison of differential privacy guarantees for the construction of DPMix datasets of size $T = 1 \times 10^6$ at the empirical ``sweet spot'' values of $k$, which successfully defend against data poisoning while sacrificing minimal accuracy.

\subsection{Defending with DP Augmentations in Practice}%todo nice title

We investigate the practical implications of Theorem~\ref{thm:mixup_privacy} in Figure~\ref{fig:bounds}, where we show the predicted theoretical privacy guarantees in Figure~\ref{fig:bounds:theoretical} and the direct practical application for defenses against data poisoning in Figure~\ref{fig:bounds:empirical}. Figure~\ref{fig:bounds:empirical} shows the average poison success for a strong, adaptive gradient matching attack against a ResNet-18 trained on CIFAR-10 (the setting considered in \citet{geiping2020witches} with an improved adaptive attack). We find that the theoretical results predict the success of a defense by mixup with Laplacian noise surprisingly well.

As a result of Theorem~\ref{thm:mixup_privacy}, we investigate the data augmentations previously considered in Section~\ref{sec:experiments} with additional Laplacian noise, also in the setting of a gradient matching attack. Figure \ref{fig:variations_mixing} shows that the benefits of Laplacian noise which we only prove for mixup also extend empirically to variants of mixing data augmentations such as CutMix and MaxUp. In particular, combining MaxUp with Laplacian noise of sufficient strength ($s=16/255$) completely shuts down the data poisoning attack via adaptive gradient matching, significantly improving upon numbers reached by MaxUp alone. 

%\Micah{This sentence and paragraph are strange and can be removed along with the corresponding figure.  Maybe, make it about DPSGD} We verify that this effect is not an automatic consequence of \textit{just} adding noise to a data augmentation in Figure~\ref{fig:variations_mixing_gradients}. Here, we ablate Figure~\ref{fig:variations_mixing} by replacing the Laplacian input noise with Gaussian gradient noise and repeating the same experiment. In comparison, the data augmentations gain less defensive strength and exceedingly reduce in validation performance. Especially the well-performing MaxUp data augmentation actually decreases in defensive capabilities when adding Gaussian gradient noise. This ablation nicely validates the practical implications of Theorem~\ref{thm:mixup_privacy}.

% \begin{figure}
%     \centering
%     \includegraphics[width=0.49\textwidth]{}
%     \caption{Ablation study to Fig.~\ref{fig:variations_mixing}. The gains in poison defense found through Theorem \ref{thm:mixup_privacy} do not apply for any addition of noise to a data augmentation. We show this by replacing the Laplacian input noise from Fig.~\ref{fig:variations_mixing} with varying levels of Gaussian gradient noise ($n=0.0001, n=0.002, n=0.005$). The benefits of the addition of noise are reduced and especially the trade-off to validation performance is worsened.}
%     \label{fig:variations_mixing_gradients}
% \end{figure}
%\vspace{-2mm}
\section{Discussion}
\label{sec:discussion}
Strong data augmentations have previously been used to improve generalization in neural networks. In this work, we first show that such augmentations also yield a state-of-the-art empirical defense against a range of data poisoning and backdoor attacks.  We then go a step further and analyse these data augmentations theoretically through the lens of differential privacy, due to its connections to poisoning robustness. We prove that mixup augmentation enhances the defensive guarantees obtained by adding noise to inputs, improving standard guarantees at least linearly in mixture width. Finally, we apply these findings practically, evaluating the effects of mixup data augmentations combined with Laplacian input noise.

\section*{Acknowledgements}
This work was supported by the JP Morgan Faculty Research Awards Program, the DARPA GARD program, and the DARPA YFA program.  Additional support was provided by DARPA QED and the National Science Foundation DMS program.

%\clearpage
\bibliography{references}
\bibliographystyle{icml2021}

\appendix
\section{Appendix}
Experimental details for the experiments shown in the main paper are contained in this document.

\subsection{Backdoor Attacks}
For the patch attack, we insert patches of size $4 \times 4$ into CIFAR train images from target class and test images from victim class. The patches are generated using a Bernoulli distribution and are normalized using the mean and standard deviation of CIFAR training data. The patch location for each image is chosen at random. To evaluate the effectiveness of the backdoor attack and our proposed defenses, we train a ResNet-18 model on poisoned data with cross-entropy loss. The model is trained for 80 epochs using SGD optimizer with a momentum of 0.9, a weight decay of 5e-4 and learning rate of 0.1 which we reduce by a factor of 10 at epochs 30, 50 and 70. A batch size of 128 is used during training.  

We run our experiments for HTBD and CLBD in Table 6 by implementing mixup and CutMix in the publically available framework of \citet{schwarzschild2020just}, and using this re-implementation for our comparison with the hyperparameters proposed there.

\subsection{Targeted Data Poisoning}
We run our experiments for feature collision attacks in Table 4 by likewise using the framework of \citet{schwarzschild2020just}, running the defense with the same settings as proposed there and following the constraints considered in this benchmark. For gradient matching we likewise implement a number of data augmentations as well as input noise into the framework of \citet{geiping2020witches}. We run all gradient matching attacks within their proposed constraints, using a subset of 1\% of the training data to be poisoned for gradient matching and an $\ell^\infty$ bound of 16/255. For all experiments concerning gradient matching we thus consider the same setup of a ResNet-18 trained on normalized CIFAR-10 with horizontal flips and random crops of size 4, trained by Nesterov SGD with $0.9$ momentum and 5e-4 weight decay for 40 epochs for a batch size of 128. We drop the initial learning rate of 0.1 at epochs 14, 24 and 35 by a factor of 10. For the ImageNet experiments we consider the same hyperparameters for an ImageNet-sized ResNet-18, albeit for a smaller budget of $0.01\%$ as in the original work.

Comparing to poison detection algorithms, we re-implement \textit{spectral signatures} \citep{tran2018spectral}, \textit{deep K-NN} \citep{peri2019deep} and \textit{Activation Clustering} \citep{acchen2018detecting} with hyperparameters as proposed in their original implementations. For differentially private SGD, we implement Gaussian gradient noise and gradient clipping to a factor of 1 on the mini-batch level (otherwise the ResNet-18 architecture we consider would be inapplicable due to batch normalizations), and vary the amount of gradient noise with values ($0.0001$, $0.001$, $0.01$) to produce the curve in Fig.~2.

To implement data augmentation defenses we generally these data augmentations straightforward as proposed in their original implementations, also keeping components such as the late start of Maxup after 5 epochs described in \citet{gong2020maxup} and the randomized activation of CutMix described in \citet{zhang2017mixup}.

\paragraph{Mixup Experiments with noise}
We repeat the same setup for the empirical experiments in Fig.~4~b), increasing the mixing strength of mixup for several noise levels. We extend this to other data augmentation in Fig.~5 and plot the trade-off between security and performce there.

\end{document}